\newtheorem{lemma}{Lemma}
\newlength{\dhatheight}
\newcommand{\norm}[1]{\left\lVert#1\right\rVert}
\begin{document}
% \renewcommand\thelinenumber{\color[rgb]{0.2,0.5,0.8}\normalfont\sffamily\scriptsize\arabic{linenumber}\color[rgb]{0,0,0}}
% \renewcommand\makeLineNumber {\hss\thelinenumber\ \hspace{6mm} \rlap{\hskip\textwidth\ \hspace{6.5mm}\thelinenumber}}
% \linenumbers
% \pagestyle{headings}
% \mainmatter
% \def\CVPR17SubNumber{***}  % Insert your submission number here

%%%%%%%%% TITLE
\title{Linking Image and Text with 2-Way Nets} % Replace with your title

\author{Aviv Eisenschtat and Lior Wolf\\
The Blavatnik School of Computer Science\\
Tel Aviv University\\
{\tt\small eisen.aviv@gmail.com, wolf@cs.tau.ac.il}}

\date{}

\maketitle

\begin{abstract}
Linking two data sources is a basic building block in numerous computer vision problems.  Canonical Correlation Analysis (CCA) achieves this by utilizing a linear optimizer in order to maximize the correlation between the two views.  Recent work makes use of non-linear models, including deep learning techniques, that optimize the CCA loss in some feature space. In this paper, we introduce a novel, bi-directional neural network architecture for the task of matching vectors from two data sources. Our approach employs two tied neural network channels that project the two views into a common, maximally correlated space using the Euclidean loss. We show a direct link between the correlation-based loss and Euclidean loss, enabling the use of Euclidean loss for correlation maximization. To overcome common Euclidean regression optimization problems, we modify well-known techniques to our problem, including batch normalization and dropout. We show state of the art results on a number of computer vision matching tasks including MNIST image matching and sentence-image matching on the Flickr8k, Flickr30k and COCO datasets.
\end{abstract}

\section{Introduction}

Computer vision emerged from its roots in image processing when researchers began to seek an understanding of the scene behind the image. Linking visual data $X$ with an external data source $Y$ is, therefore, the defining task of computer vision. When applying machine learning tools to solve such tasks, we often consider the outside source $Y$ to be univariate, e.g., in image classification. A more general scenario is the one in which $Y$ is also multidimensional. Examples of such view to view linking include matching between video and concurrent audio, matching an image with its textual description, matching images from two fixed views, etc.

The classical method of matching vectors between two different domains is  Canonical Correlation Analysis (CCA). The algorithm has been generalized in many ways: regularization was added ~\cite{martin1979multivariate}, kernels were introduced~\cite{kcca1,kcca2,kcca3}, versions for more than two sources were developed~\cite{Tenenhaus2011} etc. Recently, with the advent of deep learning methods, deep versions were created and showed promise. 

The current deep CCA methods optimize the CCA loss on top of a deep neural network architecture. In this work, an alternative is presented in which a network is built to map one source $X$ to another source $Y$ and back. This architecture, which bears similarities to the encoder-decoder framework~\cite{hintonnature}, employs the Euclidean loss. 

The Euclidean loss is hard to optimize for, when compared to classification losses such as the cross entropy loss. We, therefore, introduce a number of contributions that are critical to the success of our methods. These include: (i) a mid-way loss term that helps support the training of the hidden layers; (ii) a decorrelation regularization term that links the problem back to CCA; (iii) modified batch normalization layers; (iv) a regularization of the scale parameter that ensures that the variance does not diminish from one layer to the next; (v) a tied dropout method; and (vi) a method for dealing with high-dimensional data.

Taken together, we are able to present a general and robust method. In an extensive set of experiments, we present clear advantages over both the classical and recent methods.

\section{Previous work}
Canonical Correlation Analysis (CCA)~\cite{cca} is a statistical method for computing a linear projection for two views into a common space which maximizes their correlation. CCA plays a crucial role in many computer vision applications including multiview analysis~\cite{conf/cvpr/SharmaKDJ12}, multimodal human behavior analysis~\cite{song_multimodal_2012}, action recognition~\cite{4547427}, and linking text with images~\cite{Klein_2015_CVPR}. There are a large number of CCA variants including: regularized CCA~\cite{regularized_cca}, Nonparametric canonical correlation analysis (NCCA) ~\cite{DBLP:journals/corr/MichaeliWL15}, and Kernel canonical correlation analysis (KCCA)~\cite{kcca1,kcca2,kcca3}, a method for producing non-linear, non-parametric projections using the kernel trick. Recently, randomized non-linear component analysis (RCCA)~\cite{rcca} emerged as a low-rank approximation of KCCA. 

While CCA is restricted to linear projections, KCCA is restricted to a fixed kernel. Both methods do not scale well with the size of the dataset and the size of the representations. A number of methods ~\cite{deepcca,dccae,chandar2016correlational,mae} based on Deep Learning were recently proposed that aim to overcome these drawbacks. Deep canonical correlation analysis~\cite{deepcca} processes the pairs of inputs through two network pipelines and compares the results of each pipeline via the CCA loss. 

~\cite{dcca_text} and ~\cite{DBLP:journals/corr/WangLL15} extend ~\cite{deepcca} to the task of images and text matching. The first employs the same model and training process of ~\cite{deepcca} while the latter employs a different training scheme on the same architecture. Unlike ~\cite{dcca_text} and ~\cite{DBLP:journals/corr/WangLL15} we present a novel deep model for matching images and text.

Other deep CCA methods, including ours, are inspired by a family of encoding/decoding unsupervised generative models~\cite{ae,ae1,sae,denois,sdenois} that aim to capture a meaningful representation of input $x$ by applying a non-linear encoding function $E(x)$, decoding the encoded signal using a non-linear decoding function $D(x)$ and minimizing the squared L2 distance between the original input and the decoded output. Some of the auto-encoder based algorithms incorporate a noise on the input~\cite{denois,sdenois} or enforce a desired property using a regularization term~\cite{sae}. 

Correlation Networks (CorrNet)~\cite{chandar2016correlational} and Deep canonically correlated autoencoders (DCCAE)~\cite{dccae} expand the auto-encoder scheme by considering two input views and two output views. The encoding is shared between the two views (CorrNet) or the differences in the encodings are minimized (DCCAE). In both cases, it serves as a common bottleneck. Our model goes from one view to the other (in both directions) and not from each view to a reconstructed view. 

The CCA loss is used by both CorrNet and DCCAE. The latter contribution explicitly states that the L2 loss is inferior to the CCA loss term~\cite{dccae}. Our network, however, uses L2 successfully. This reinforces the need to apply the methods we propose in this work in order to enable effective training based on the L2 loss. For this end, we introduce innovative techniques based on common practices in deep learning, adapted to the problem at hand. These techniques include: dropout, batch normalization, and leaky ReLUs. While the latter is applied as is, the former two need to be carefully modified for our networks.

Dropout~\cite{dropout} is a regularization method developed to reduce over-fitting in deep neural networks by zeroing a group of neurons at each training iteration. This stochastic elimination reduces the co-adaptation between neurons in the same layer and simulates the training of an ensemble of networks with shared weights. 

Batch Normalization~\cite{bn} is used as a stabilizing   
mechanism for training a neural network by scaling the output of a hidden layer to zero norm and unit variance. This scaling lowers the change of distribution between neurons throughout the network and helps to speed up the training process. 

Rectified Linear Unit (ReLU)~\cite{relu} is a non-linear activation function that does not suffer from the saturation phenomenon, which the classical sigmoids suffer from. Conventional ReLU zero negative activations, and as a result, no gradient is produced for many of the neurons. A few variants of ReLU were, therefore, proposed~\cite{relu2,relu3} that reduce the effect of negative activations, but do not zero them completely. Similar to~\cite{relu2} and unlike~\cite{relu3}, we do not train the leakiness parameter and instead set it to a constant value.

As one of our contributions, we add a regularization term that removes the pairwise covariances of the learned features. A similar term was recently reported in work~\cite{DBLP:journals/corr/CogswellAGZB15} as part of a classification system (unrelated to modeling correlations between vectors). We adapt their terminology when describing our bi-directional term. 

\begin{figure}[t]
\centering
\includegraphics[width=\columnwidth, scale=0.5]{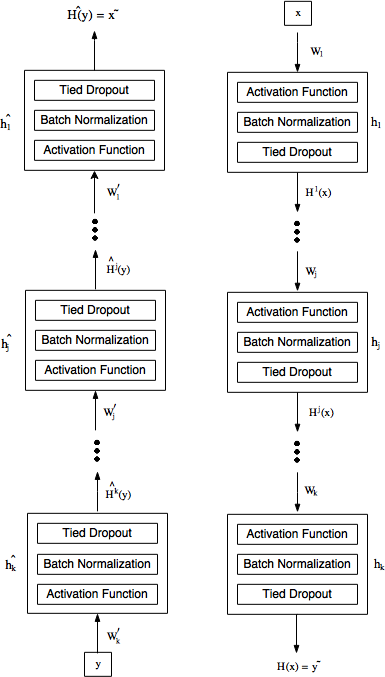}
\caption{The 2-way network model. Each channel transforms one view into the other. A middle representation is extracted for correlation maximization}
\label{fig:arch}
\end{figure}

\section{The Network Model}
\label{sec:arch}
This section contains a detailed description of our proposed model, which we term the 2-way net%
\footnote{Code can be found at ~\url{https://github.com/aviveise/2WayNet}}
. The model utilizes the L2 loss in order to create a bi-directional mapping between two vector spaces. The absence of a correlation based loss (such as in DeepCCA~\cite{deepcca} and CorrNet~\cite{chandar2016correlational}) makes this model simpler. Like other regression problems, there are inherent challenges in obtaining meaningful solutions~\cite{neural-networks-2}. These challenges are further amplified by the multivariate and layered structure of the performed regression. We, therefore, modify the problem in various ways, each contributing to the overall success.

\subsection{Basic Architecture}
Our proposed architecture is illustrated in Fig.~\ref{fig:arch}. It contains two reconstruction channels. Both channels contains $k$ hidden layers $\{h_1, h_2, ..., h_k\}$ and $\{\hat{h_1}, \hat{h_2}, ..., \hat{h_k}\}$. Lets define $H_i(x)$ and $\hat{H_i(y)}$ as the output of each channel at layer $i$ given network inputs $x$ and $y$ respectively, the model is optimized to minimize the Eucledean loss between both $\hat{H_i(y)}$ and $x$, and $H_i(x)$ and y. The two channels share weights and dropout function as explained in ~\ref{sec:drop}

The activations of each hidden layer are computed by a function $h(x)=\Phi\left(Wx+b_2\right)$ from $\mathbb{R}^{d_1}$ to $\mathbb{R}^{d_2}$, where $W\in{\mathbb{R}^{d_2 \times d_1}}$ is the weight matrix, $b_2\in \mathbb{R}^{d_2}$ is the bias vector and $\Phi$ is a non-linear function, which in our model is a leaky rectified linear unit~\cite{relu2}. The tied layer is given as $\hat{h}(y)=\Phi\left(W^Ty+b_1\right)$, and employs the transpose of the matrix $W$ and an untied bias term $b_1 \in \mathbb{R}^{d_1}$. 

Given a pair of views $x \in \mathbb{R}^{d_x}$ and $y \in \mathbb{R}^{d_y}$, two reconstructions are created: $\tilde{x} \in \mathbb{R}^{d_x}$ and $\tilde{y} \in \mathbb{R}^{d_y}$ by employing the two networks $H=h_{1}\circ h_{2}\circ...\circ h_{k}$ and $\hat{H}=\hat{h}_{k}\circ\hat{h}_{x_{k-1}}\circ...\circ\hat{h}_{1}$, as $\tilde{x}=\hat{H}(y)$ and $\tilde{y}=H(x)$. 

Loss is measured between $x$ and $\tilde{x}$ and $y$ and $\tilde{y}$. Moreover, the Euclidean distance is also minimized directly on the desired representations. In order to do so, we select a mid-network position $j=\lceil k/2 \rceil$. We then add a loss term by considering the two networks: $H^j=h_{1}\circ h_{2}\circ...\circ h_{j}$, and $\hat H^j=\hat{h}_{k}\circ\hat{h}_{x_{k-1}}\circ...\circ\hat{h}_{j+1}$. A loss term is then added that compares $H^j(x)$ and $\hat{H}^j(y)$.

The overall loss (sans regularization terms) is given by the three terms $L_x=\|x-\tilde{x}\|^2$, $L_y=\|y-\tilde{y}\|^2$, and $L_h=\|H^j(x)-\hat{H}^j(y)\|^2$. Note that minimizing Euclidean distances differs from maximizing the pairwise correlations as is done in CCA and its variants DeepCCA~\cite{deepcca} and RCCA~\cite{rcca}. 

In our experiments, in order to compare with previous work, we use the correlation as the success metric. As the Lemma below shows, there is a connection between the correlation of two vectors and their Euclidean distance, this connection also depends on the variance of the vectors.  

\begin{lemma}\label{lemma:Euclidean}

Let $x\in\mathbb{R}^{n}$ and $y\in\mathbb{R}^{n}$ denote two paired lists of $n$ matching samples from two random variables with zero mean and $\sigma_{x}^2$ and $\sigma_{y}^2$ variances. Then, the correlation between the two $n$ dimensional samples $x$ and $y$ equals $\frac{\sigma_{x}}{2\sigma_{y}}+\frac{\sigma_{y}}{2\sigma_{x}}-\frac{\norm{x-y}^2}{2n\sigma_{x}\sigma_{y}}$.

\end{lemma}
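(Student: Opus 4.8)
The plan is to reduce everything to inner products and norms of the two sample vectors, using the polarization identity to bring the Euclidean distance into play. First I would write out the Pearson correlation of the two $n$-sample lists. Because both random variables are assumed to have zero mean, the empirical means vanish and the correlation collapses to the cosine of the angle between the vectors,
\[
\mathrm{corr}(x,y) = \frac{\langle x, y \rangle}{\norm{x}\,\norm{y}},
\]
where $\langle\cdot,\cdot\rangle$ denotes the standard inner product on $\mathbb{R}^{n}$.

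Next I would convert the denominator norms into the prescribed variances. Under the zero-mean assumption, the ($1/n$-normalized) sample variance of $x$ is $\tfrac{1}{n}\sum_i x_i^2 = \tfrac{1}{n}\norm{x}^2$, so that $\norm{x}^2 = n\sigma_x^2$ and, identically, $\norm{y}^2 = n\sigma_y^2$. Substituting these gives $\norm{x}\,\norm{y} = n\sigma_x\sigma_y$, whence the correlation equals $\langle x,y\rangle/(n\sigma_x\sigma_y)$.

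The final step is to eliminate the inner product in favor of $\norm{x-y}^2$. Expanding the squared distance through the polarization identity, $\norm{x-y}^2 = \norm{x}^2 - 2\langle x,y\rangle + \norm{y}^2$, and solving for the cross term yields $\langle x,y\rangle = \tfrac12\bigl(n\sigma_x^2 + n\sigma_y^2 - \norm{x-y}^2\bigr)$. Plugging this into the correlation expression and distributing the denominator term by term produces exactly $\frac{\sigma_x}{2\sigma_y} + \frac{\sigma_y}{2\sigma_x} - \frac{\norm{x-y}^2}{2n\sigma_x\sigma_y}$, as claimed.

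The computation is elementary, so there is no genuine analytic obstacle; the one point demanding care is the bookkeeping of conventions. In particular, the identity $\norm{x}^2 = n\sigma_x^2$ relies both on the zero-mean hypothesis (so that no subtraction of an empirical mean is required) and on the $1/n$ normalization of the variance. Had an unbiased $1/(n-1)$ estimator been intended, the factor $n$ in the denominator would have to be replaced by $n-1$ and the stated constant would shift accordingly. I would therefore fix the variance convention explicitly at the outset to guarantee that all constant factors line up with the claimed expression.
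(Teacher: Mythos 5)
Your proof is correct and follows essentially the same route as the paper's: expand $\norm{x-y}^2$ to isolate the cross term $\langle x,y\rangle$, use the zero-mean hypothesis to identify $\norm{x}^2=n\sigma_x^2$ and $\norm{y}^2=n\sigma_y^2$, and substitute into the sample correlation formula. Your explicit remark about the $1/n$ versus $1/(n-1)$ variance convention is a sensible clarification that the paper leaves implicit, but it does not change the argument.
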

\begin{proof}
Given two n-dimensional vectors $x$ and $y$ we consider the squared Euclidean distance 
\begin{equation} \label{eq:distance}
\norm{x-y}^2=\sum_{j=1}^{n}(x_{j}^2) + \sum_{j=1}^{n}(y_{j}^2) -2\sum_{j=1}^{n}(x_{j}y_{j})\nonumber
\end{equation}
Thus: 
\begin{equation}
\sum_{j=1}^{n}(x_{j}y_{j})=\frac{n\sigma_{x}^2}{2}+\frac{n\sigma_{y}^2}{2}-\frac{\norm{x-y}^2}{2}
\end{equation}
For zero mean variables, the correlation between $x$ and $y$ is given by $c=\frac{1}{n}\frac{\sum_{j=1}^{n}(x_{j}y_{j})}{\sigma_{x}\sigma_{y}}$. 
Combining with~\ref{eq:distance} results in what had to be proven.
%\begin{equation} \label{eq:corr}
%c=\frac{\sigma_{x_i}}{\sigma_{y_i}}+\frac{\sigma_{y_i}}{\sigma_{x_i}}-\frac{d(x_i,y_i)^2}%{n\sigma_{x_i}\sigma_{y_i}}
%\end{equation}
%\qed
\end{proof}
%EDITED UP TO HERE

Given a batch of samples from views $x$ and $y$, we measure the correlation between the outputs of two matching layers, $\{h_j(x_1), ..., h_j(x_n)\}$ and $\{\hat{h}_j(y_i), ..., \hat{h}_j(y_n)\}$ as the sum of correlations between the activations of each matching neuron. The Lemma below extends Lemma \ref{lemma:Euclidean} and shows that the sum of correlations which we aim to maximize is bounded by a function of the Euclidean loss between the two representations.

\begin{lemma}\label{lemma:sumcorr}
Given two matching hidden layers, $h_j$ and $\hat{h_j}$ with $m$ neurons each. $a_k$ is the activation vector of neuron $k$ from $h_j$ with standard deviation $\sigma_{a_k}$ and $b_k$ is the activation vector of neuron $k$ from $\hat{h}_j$ with standard deviation $\sigma_{b_k}$. Each vector is produced by feeding a batch of samples of size $n$ from views $x$ and $y$ through channels $H$ and $\hat{H}$ respectively. The sum of correlations $C$ is bounded by:
\begin{align}
\sum_{k=1}^{m}C_k &\geq \frac{1}{2}\sum_{k=1}^{m}(\frac{\sigma_{a_k}^2+\sigma_{b_k}^2}{\sigma_{a_k}\sigma_{b_k}}) \nonumber 
\\&- \frac{1}{2n}\sum_{k=1}^{m}\norm{a_k -b_k}^2\sum_{k=1}^{m}\sigma_{a_k}^{-1}\sigma_{b_k}^{-1}\label{eq:corrlossconnection}
\end{align}
\end{lemma}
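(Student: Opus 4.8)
The plan is to reduce the statement to a neuron-by-neuron application of Lemma~\ref{lemma:Euclidean} followed by a single elementary relaxation. First I would apply Lemma~\ref{lemma:Euclidean} separately to each matching pair of activation vectors $(a_k,b_k)$, treating them as the two $n$-dimensional paired samples of that lemma. This yields, for every $k$, the exact per-neuron identity $C_k=\frac{\sigma_{a_k}}{2\sigma_{b_k}}+\frac{\sigma_{b_k}}{2\sigma_{a_k}}-\frac{\norm{a_k-b_k}^2}{2n\sigma_{a_k}\sigma_{b_k}}$, so that the sum of correlations is obtained in closed form with no inequality yet in sight.

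Next I would combine the first two terms using $\frac{\sigma_{a_k}}{2\sigma_{b_k}}+\frac{\sigma_{b_k}}{2\sigma_{a_k}}=\frac{1}{2}\frac{\sigma_{a_k}^2+\sigma_{b_k}^2}{\sigma_{a_k}\sigma_{b_k}}$ and sum over $k=1,\dots,m$, producing the exact expression
\begin{equation}
\sum_{k=1}^{m}C_k=\frac{1}{2}\sum_{k=1}^{m}\frac{\sigma_{a_k}^2+\sigma_{b_k}^2}{\sigma_{a_k}\sigma_{b_k}}-\frac{1}{2n}\sum_{k=1}^{m}\frac{\norm{a_k-b_k}^2}{\sigma_{a_k}\sigma_{b_k}}.
\end{equation}
The first summand already coincides with the first term on the right-hand side of the claimed bound, so the whole problem collapses to relaxing the single negative summand $\frac{1}{2n}\sum_k \frac{\norm{a_k-b_k}^2}{\sigma_{a_k}\sigma_{b_k}}$.

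The key step is to set $u_k=\norm{a_k-b_k}^2\ge 0$ and $v_k=\sigma_{a_k}^{-1}\sigma_{b_k}^{-1}\ge 0$ and invoke the elementary inequality $\sum_{k}u_k v_k\le\left(\sum_k u_k\right)\left(\sum_k v_k\right)$, which holds for any two nonnegative sequences because the right-hand side expands to $\sum_k u_k v_k$ plus the manifestly nonnegative cross terms $\sum_{k\neq l}u_k v_l$. Since this summand enters the identity with a minus sign, replacing $\sum_k u_k v_k$ by the larger product $\left(\sum_k u_k\right)\left(\sum_k v_k\right)$ can only decrease the right-hand side, converting the equality into exactly the lower bound~\eqref{eq:corrlossconnection}.

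I expect the only real subtlety to be bookkeeping rather than a genuine obstacle. One must check that the zero-mean hypothesis of Lemma~\ref{lemma:Euclidean} is legitimate for each neuron's activations (here justified by the batch-normalization layers, which center the activations) and that every $\sigma_{a_k},\sigma_{b_k}$ is strictly positive so that $v_k$ is well defined. Granting those two conditions, both the per-neuron identity and the nonnegative-sequence inequality are immediate, and no further estimation is needed to close the argument.
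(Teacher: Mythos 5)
Your proposal is correct and arrives at exactly the bound in the lemma. The first half --- applying Lemma~\ref{lemma:Euclidean} neuron by neuron and summing to get the exact identity $\sum_k C_k=\frac{1}{2}\sum_k\frac{\sigma_{a_k}^2+\sigma_{b_k}^2}{\sigma_{a_k}\sigma_{b_k}}-\frac{1}{2n}\sum_k\frac{\norm{a_k-b_k}^2}{\sigma_{a_k}\sigma_{b_k}}$ --- is identical to the paper's. Where you diverge is in justifying the one inequality that remains, namely $\sum_k u_kv_k\le\bigl(\sum_k u_k\bigr)\bigl(\sum_k v_k\bigr)$ for the nonnegative sequences $u_k=\norm{a_k-b_k}^2$ and $v_k=\sigma_{a_k}^{-1}\sigma_{b_k}^{-1}$. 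The paper derives this via an Abel (summation-by-parts) transform, setting $G_k$ to be the partial sums of the $u_k$ and $f_k=v_k$, then dropping a nonnegative term and replacing each $G_k$ by $G_m$; your route is to expand the product of sums and observe that the cross terms $\sum_{k\ne l}u_kv_l$ are manifestly nonnegative. The two arguments prove precisely the same inequality under precisely the same hypotheses (nonnegativity of both sequences, which both proofs invoke), so nothing is gained or lost in generality --- but your one-line expansion is more elementary and makes it transparent that the bound is typically quite loose, whereas the Abel machinery in the paper obscures this. Your closing caveats (strict positivity of the standard deviations so that $v_k$ is defined, and the zero-mean hypothesis of Lemma~\ref{lemma:Euclidean} being supplied by the batch normalization) are both legitimate and are in fact left implicit in the paper, so flagging them is a small improvement rather than a gap.
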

\begin{proof}
From lemma \ref{lemma:Euclidean}, we get:
\begin{equation}\label{eq:sumcorr}
\sum_{k=1}^{m}C_k = \frac{1}{2}\sum_{k=1}^{m}(\frac{\sigma_{a_k}^2+\sigma_{b_k}^2}{\sigma_{a_k}\sigma_{b_k}}) - \frac{1}{2n}\sum_{k=1}^{m}(\frac{\norm{a_k-b_k}^2}{\sigma_{a_k}\sigma_{b_k}})
\end{equation}
We will define $G_m=\sum_{k=1}^{m}\norm{a_k-b_k}^2$ and $f_k=\sigma_{a_k}^{-1}\sigma_{b_k}^{-1}$.
Using Abel transform:
%$\sum_{k=1}^{n}f_kg_k=j_nG_n-\sum_{k=1}^{n-1}G_k(f_k+1-f_k)$ results in 
\begin{eqnarray}
\sum_{k=1}^{m}\frac{\norm{a_k-b_k}^2}{\sigma_{a_k}\sigma_{b_k}} 
%& = & j_mG_m-\sum_{k=1}^{n-1}G_k(f_k+1-f_k) \\ 
&=& f_mG_m-\sum_{k=1}^{m-1}G_kf_{k+1}+\sum_{k=1}^{m-1}G_kf_k \nonumber \\ &\leq& f_mG_m+\sum_{k=1}^{m-1}G_kf_k  \nonumber \\&\leq& f_mG_m+G_m\sum_{k=1}^{m-1}f_k \nonumber= G_m\sum_{k=1}^{m}f_k\\&=& \sum_{k=1}^{m}\norm{a_k-b_k}^2\sum_{k=1}^{m}\sigma_{a_k}^{-1}\sigma_{b_k}^{-1} \label{eq:abel}
\end{eqnarray}
Note that both $\sigma_{a_k}\sigma_{b_k}$ and $\norm{a_k-b_k}^2$ are positive for all $k$ which makes the above inequalities valid.
Inserting \ref{eq:abel} in \ref{eq:sumcorr} results in what had to be proven.
\end{proof}
From the above Lemma, we can conclude that by minimizing the L2 loss together with maximizing the variance of each neuron activation will result in maximization of the sum of correlations. 

Solving this regression problem tends to eliminate the variance of the output representations. To overcome this limitation, we add two instruments. The first is batch normalization layer \cite{bn} (BN) after each hidden layer. The settings of the batch normalization layer differ from the common settings to adapt to this model. Another instrument is regularizing the gamma parameter the batch normalization layer introduces. More details can be found below.

To the loss term, we add regularization terms. The first is weight decay $R_w=\sum\|W\|^2$. A second regularization term is added in order to reduce the cross correlations between the network activations of the same layer. The property we encourage is inherent to CCA-based solutions where decorrelation is enforced. In our network solutions, we add a soft regularization term. During training, we consider the $N$ samples of a single batch $\{(x_i,y_i)\}_{i=1}^N$ and consider the set of mid-network activations $\{(H^j(x_i),\hat H^j(y_i))\}_{i=1}^N$. The decorrelation regularization term is given by: 
\begin{equation} \label{eq:deconv}
\begin{split}
R_{decov}&=\frac{1}{2}\left(\|C_h\|^2_F-\|diag\left(C_h\right)\|_2^2\right)\\& +\frac{1}{2}\left(\|C_{\hat{h}}\|^2_F-\|diag\left(C_{\hat{h}}\right)\|_2^2\right)~,
\end{split}
\end{equation}
where $C_h=\frac{1}{N} \sum_i H^j(x_i)^\top H^j(x_i)$ is the covariance estimator for $H^j(x)$ and $C_{\hat{h}}=\frac{1}{N}\sum_i \hat{H}^j(y_i)^\top\hat{H}^j(y_i)$ is the covariance estimator for $\hat{H}^j(y)$. This regularization term is minimized when the off-diagonal coefficients of both $C_h$ and $C_{\hat{h}}$ are zero.

\subsection{Batch normalization layers}\label{sec:bn}
As shown above, in order to maximize the correlation we need not only to minimize the Euclidean loss but also to increase the variance of each neuron's output. This is done by introducing a batch normalization layer \cite{bn} customized to meet the model's needs. 

Given a vector of activations $a = [a_1,\dots,a_d]$ produced by one of the network's hidden layers for a given batch of inputs, we normalize $a$ to produce $a' = [a'_1,\dots,a'_d]$, where ${a_k'}=\frac{a_k-\mu_k}{\sigma_k}$ and $\mu_k$ and $\sigma^2_k$ are the mean and variance of neuron $k$ on the given batch. This is followed by scaling and shifting by learned parameters to produce ${a_k''}={\gamma }_k{a_k'}+{\beta }_k$. The BN layer mitigates the loss of variance by enforcing unit variance and by removing the influence of the weights of the hidden layer on the output's variance.

BN layers are usually placed before the non-linearity or on the input of the layer as a preprocessing phase as shown in \cite{he2016identity}. This setting poses several problems. First, ReLU lowers the variance of the output which is counterproductive to our goal. Second, applying ReLU after BN has the effect of zeroing every $k$ when $a_k$ is below the mean in a given batch plus the term $\beta_k/\gamma_k$.  Typically, $\beta_k$ is initialized to zero and for a symmetric activation distribution, half of the activations are zeroed. When employing a bi-directional network, the zeroing effect occurs in both directions.

In order to estimate the magnitude of this effect, let us assume that we have a process that at time $i$ outputs two vectors $u_i = H^j(x_i)$ and $v_i = \hat H^j(y_i)$, both in $\mathbb{R}^d$, which are the hidden representation at layer $j$ for a pair of samples $(x_i,y_i)$. Denote by $\rho_k$ the correlation between the activations at neuron $k$.

Let $s_i=\{k | u_i(k)>\mu_k\}$ be the group of indices of the values in $u_i$ that are larger than their population mean. Let $\hat s_i=\{k | v_i(k)>\hat \mu_k\}$ be the equivalent for the vectors $v_i$. We observe the intersection $s_i\cap \hat s_i$, which is the group of active neurons, following a threshold at the mean value on both $u_i$ and $v_i$.

As the Lemma below shows, even if the correlation $\rho_k$ is relatively high, the size of the intersection set $s_i\cap \hat s_i$ is closer to the value $d/4$ obtained for randomly permuted vectors than to the maximal value of $d/2$.

\begin{lemma}\label{lemma:bn}
Assume that $u_i$ and $v_i$ are drawn from a multivariate normal distribution with zero mean and the identity covariance matrix, such that the correlation between $u_i(k)$ and $v_i(k)$ for all $k$ is $\rho_k=\rho$. Then,
$E\left(\left| s_i\cap \hat s_i \right|\right)=d\left[\frac{1}{4}+\frac{{{\mathrm{sin}}^{-1} \rho \ }}{2\pi }\right] $.
\end{lemma}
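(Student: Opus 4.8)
The plan is to reduce the problem to a single pair of neurons via linearity of expectation and then to evaluate a standard bivariate-normal orthant probability. First I would observe that since the distribution has zero mean we have $\mu_k = \hat\mu_k = 0$ for every $k$, so the defining conditions of $s_i$ and $\hat s_i$ collapse to sign conditions: $k \in s_i \cap \hat s_i$ exactly when $u_i(k) > 0$ and $v_i(k) > 0$. Writing the cardinality as a sum of indicators,
\begin{equation}
\left| s_i \cap \hat s_i \right| = \sum_{k=1}^{d} \mathbf{1}\left[u_i(k) > 0,\ v_i(k) > 0\right], \nonumber
\end{equation}
and applying linearity of expectation reduces the claim to computing the single scalar probability $p := P\left(u_i(k) > 0,\ v_i(k) > 0\right)$, which by the identity-covariance and common-correlation assumptions is identical for all $k$. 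The target then follows immediately from $E(|s_i \cap \hat s_i|) = d\,p$.

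The heart of the argument is to show that $p = \frac{1}{4} + \frac{\sin^{-1}\rho}{2\pi}$, the classical quadrant (orthant) probability for a standard bivariate normal with correlation $\rho$. I would derive it geometrically rather than cite it. Represent the pair using two independent standard normals $Z_1, Z_2$ by setting $u_i(k) = Z_1$ and $v_i(k) = \rho Z_1 + \sqrt{1-\rho^2}\,Z_2$, which reproduces the required zero means, unit variances, and correlation $\rho$. In the $(Z_1,Z_2)$ plane the event $\{u_i(k) > 0,\ v_i(k) > 0\}$ is the intersection of two half-planes whose boundary lines pass through the origin, i.e.\ an angular wedge with apex at the origin. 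Because the joint law of $(Z_1, Z_2)$ is rotationally invariant, the probability assigned to any such wedge equals its opening angle divided by $2\pi$.

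It then remains to measure the wedge. Writing $Z_1 = r\cos\phi$ and $Z_2 = r\sin\phi$ and setting $\rho = \cos\alpha$ with $\alpha = \cos^{-1}\rho \in [0,\pi]$, the first condition $Z_1 > 0$ gives $\phi \in (-\pi/2,\ \pi/2)$, while the second condition becomes $\cos(\phi - \alpha) > 0$, i.e.\ $\phi \in (\alpha - \pi/2,\ \alpha + \pi/2)$. The intersection of these two arcs is $(\alpha - \pi/2,\ \pi/2)$, whose length is $\pi - \alpha$. Hence $p = (\pi - \alpha)/(2\pi)$, and substituting $\alpha = \cos^{-1}\rho = \pi/2 - \sin^{-1}\rho$ yields $p = \frac{1}{4} + \frac{\sin^{-1}\rho}{2\pi}$. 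Multiplying by $d$ gives the stated expectation.

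The only genuine obstacle is this orthant-probability evaluation; the rotational-invariance wedge computation above supplies a self-contained derivation, so the remainder of the proof is routine bookkeeping. A minor point worth stating carefully is that the common-correlation hypothesis $\rho_k = \rho$, together with the identity within-view covariance, is precisely what guarantees each summand probability is identical and thereby licenses the clean factor of $d$ in the final line.
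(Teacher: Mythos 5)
Your proof is correct, and its overall skeleton matches the paper's: reduce the expectation to $d$ times the single-pair quadrant probability $p=P(u_i(k)>0,v_i(k)>0)$ and then evaluate $p=\frac{1}{4}+\frac{\sin^{-1}\rho}{2\pi}$. The differences are in how each step is carried out. For the reduction, the paper argues that $|s_i\cap\hat s_i|$ is binomially distributed with parameters $(d,p)$ --- which tacitly uses the independence across coordinates supplied by the identity covariance --- whereas you use linearity of expectation over indicator variables, which is slightly cleaner and would survive even without cross-coordinate independence. For the quadrant probability itself, the paper simply cites the classical formula (Kendall); you derive it from scratch by writing $v_i(k)=\rho Z_1+\sqrt{1-\rho^2}\,Z_2$, exploiting rotational invariance of the isotropic Gaussian, and measuring the wedge $\left(\alpha-\frac{\pi}{2},\frac{\pi}{2}\right)$ of angular width $\pi-\alpha$ with $\alpha=\cos^{-1}\rho$. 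That computation is correct (it checks out at $\rho\in\{-1,0,1\}$) and makes the lemma self-contained at the cost of a few extra lines; the paper's citation-based route is shorter but leaves the key analytic fact unproved.
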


\begin{proof}
To estimate the size of $c$, let us look at the quadrant probability$\ p$ of $u_i(k)$ and $v_i(k)$ which is given analytically by~\cite{kendall},
\[p=P(u_i(k)>0,v_i(k)>0)=\frac{1}{4}+\frac{{{\mathrm{sin}}^{-1} \rho \ }}{2\pi }\] 
Given that the variables in $u_i(k)$ and $v_i(k)$ are drawn independently, the probability of $P(|c|=t)$ has a binomial distribution with probability $p$, thus the mean of the size of $c$ is equal to
$E(|c|)=dp=d\left[\frac{1}{4}+\frac{{{\mathrm{sin}}^{-1} \rho \ }}{2\pi }\right]$. %\qed
\end{proof}

Even in the case of a correlation as high as 0.6, the intersection will include only about 35\% of the neurons. For neurons $k$ not in this intersection, either both sides $u_i(k)$ and $v_i(k)$ are zero, meaning that no backpropagation occurs, or only one neuron is active, in which case only that side is updated and the update is a simple shrinking effect, since the loss is the magnitude of the activation. 

In order to break this symmetry, we choose to employ the BN after the non-linearity. This allows the network to choose weights that result in mostly positive activations, which remain positive after the ReLU activation units. 

\subsection{Highly leaky ReLU}

Another method to prevent the harmful effects of zeroing is by using leaky ReLU as our non-linear function. Leaky ReLU was first introduced by~\cite{relu2} in order to overcome the difficulties that arise from the elimination of the gradients from neurons with negative activation. In the 2-Way network, this effect is amplified, and we find leaky ReLU units to be extremely important. Formally, a leaky ReLU is defined as:
\[
y_i=
\left\{
	\begin{array}{ll}
		x_i  & \mbox{if } x \geq 0 \\
		ax_i & \mbox{if } x < 0
	\end{array}
\right.
\]
where $a<1$ is the leakiness coefficient and is fixed during both training and testing. In all of our experiments, we use a leakiness coefficient of 0.3. This value was selected on the validation set of the Flickr8k experiment described in Section~\ref{sec:exp} and is used for all experiments.

Using leaky ReLU helps to reduce the effect discussed in Section \ref{sec:bn} but does not replace the need for performing BN after the non-linearity. As Lemma ~\ref{lemma:bn} shows, more than half of the neurons will be multiplied by the leakiness coefficient while their matching neuron will not. This asymmetric scaling adds an artificial distance between the matching neurons, which, in turn, increases the L2 loss and reduces the training efficiency. 

\subsection{Variance injection}
Applying BN on the output of each hidden layer is not enough. The variance can still vanish during training. The problem is that the $\gamma$ factor introduced by each BN layer can be arbitrary and can diminish during training, resulting in low variance. To encourage high variance, we introduce a novel regularization term of the form $R_\gamma = \sum_{j,k} (1/\gamma_{jk})^2$, where $\gamma_{jk}$ is the scaling parameter for neuron $k$ in layer $j$.

This regularization term is enough to force the network to avoid solutions with low variance and to seek more informative output. This is demonstrated experimentally in the ablation study of Section~\ref{sec:exp}.

The compound loss term we employ is of the form:
\[L=L_x+L_y+L_h+\lambda_wR_w+\lambda_{decov}R_{decov} +\lambda_\gamma R_\gamma\]
Where $\lambda_w $,$\lambda_{decov}$, and $\lambda_\gamma$ are the regularization coefficients. While it seems that three regularization tradeoff hyperparameters would make selecting the parameter values difficult, the converse is true: in all of our varied set of experiments $\lambda_{\gamma} = \lambda_w$, and $\lambda_{decov}$ is either set to a very high value of $1/2$ or, for small datasets, to $1/20$ (see Section~\ref{sec:exp}). Moreover, by adding these terms, the network is much less sensitive to the selection of $\lambda_w$ and allows us to learn with a much higher learning rate.

\subsection{Tied dropout}\label{sec:drop}
Dropout~\cite{dropout} is a form of regularization method that simulates the training of multiple networks with shared weights. Dropout zeros neurons by element-wise multiplying the output of a hidden layer consisting of $d$ neurons for a batch of $n$ samples with a random matrix $B$ of size $d \times n$. Each element of $B$ is drawn independently from a Bernoulli distribution with a parameter $p$.

Since dropout eliminates random neurons, it prevents co-adaptation of neurons, which is a desirable property for correlation analysis. However, using dropout, as is, in our proposed model is harmful. This is because the 2-Way network aims to enhance correlations between parallel layers $h^j$ and $\hat h^j$. The elimination of neurons independently in the hidden layers creates an artificial loss, even for a perfect matching. 

Let $p$ be the dropout parameter for layer $j$, assume that the same parameter is applied on both directions. In probability $(1-p)^2$, a pair of matching neurons is active on both sides and learning occurs with the true gradient. In probability $p^2$, the pair of matching neurons is silent on both sides and no learning occurs. In probability $2p(1-p)$, only one neuron is active resulting in a shrinking effect on the other neuron. Here, too, shrinking of activations is can be damaging since it might lead to a state of constant representation.

For a dropout probability of $p=0.5$, half of the gradients would stem from a match which is silent on exactly one side, and the harmful effect is clearly seen in Section~\ref{sec:exp}.

To overcome this problem, we introduce a tied dropout layer, in which the same random matrix $B^j$ is applied to pairs of matching hidden layers: $h^j$ and $\hat h^j$, $j=1..K$. This sharing eliminates the artifacts introduced by the conventional dropout while preserving the benefits of the stochastic process and helps avoid over-fitting. 

Using tied dropout layer changes the distribution of the activations. In order to match the distribution at test time, we incorporate a scaling factor at train time. 

Assume that the activations of a single neuron are zero-centered. As discussed below, most post BN activations are almost exactly centered. In this case, the variance of the neuron activations is simply the sum of the squared activations. During training, only a ratio $1-p$ of the activations contributes to the variance. Therefore, we divide the activations, at train time, by $\sqrt{1-p}$.

\subsection{Training high dimensional inputs}
Some of the experiments shown below contain high dimensional data. High dimensional input directly increases the number of parameters and can cause over-fitting as well as an increase in training time and memory usage. To lower the number of parameters, we introduce a new type of layer we term locally dense layer. Such layer of size $n$ is composed of $m$ different dense layers $\bar{h_1}, ..., \bar{h_m}$ of size $\frac{n}{m}$ each. Input $x$ of size $d_x$ is divided into $m$ different parts of size $\frac{d_x}{m}$ and each part $x_i$ is connected into one of the dense layers $\bar{h_i}$. The outputs of all inner hidden layers are concatenated, thus producing the locally dense layer's output. To the output, we add a regular bias term $b$ of size $n$.
Using this layer reduces the number of parameters by a factor of $m$ comparing to a conventional dense layer. In the experiments below, when dealing with high dimensional input, we use a locally dense layer with two inner dense layers.

\section{Experiments}
\label{sec:exp}

We first present a detailed analysis of the two datasets most commonly used in the literature for examining recent CCA variants: MNIST half matching and X-Ray Microbeam Speech data (XRMB). We then provide additional experiments on the problem of image to sentence matching, showing state of the art results on the Flickr8k, Flickr30k and COCO datasets.

\subsection{Comparison with published results}
We follow the conventional way of evaluating the performance of CCA variants and compute the sum of the correlations of the top $c$ shared (canonical) representation variables found. The datasets used for this comparison are MNIST and XRMB. In both MNIST and XRMB experiments, we set $\lambda_{decov} = \lambda_{W} = \lambda_\gamma = 0.05$. For training, we used stochastic gradient descent with a learning rate of 0.0001 which was halved every 20 epochs. A momentum of 0.9 is used and a tied dropout probability of 0.5. 

\noindent {\bf MNIST half matching} The MNIST handwritten digits dataset~\cite{mnist} contains 60,000 images of handwritten digits for training and 10,000 images for testing. 
Each image is cut vertically into two halves, resulting in 392 features each. The goal is to maximize the correlation of the top $c=50$ canonical variables. The model used is composed of three layers of size 392, 50 and 392 respectively, noted as 392-50-392. The middle layer was taken as the output. 

\noindent {\bf X-Ray Microbeam Speech data} The XRMB~\cite{xrmb} dataset contains simultaneous acoustic and articulatory recordings. The articulatory data is represented as a 112 dimensional vector. The acoustic data are the MFCCs~\cite{logan2000mel} for the same frames, yielding a 273 dimensional vector at each point in time. For benchmarking, 30,000 random samples are used for training, 10,000 for cross-validation and 10,000 for testing. The correlation is measured across the $c=112$ top correlated canonical variables. The same training configuration of the MNIST experiment was used for the XRMB dataset. For XRMB, we tested our model using hidden layer configuration of 560-280-112-680-1365. 

Tab.~\ref{tab:exp} contains correlation comparisons on the MNIST and XRMB datasets of six CCA variants besides our proposed method. As can be seen, our method (``2WayNet'') outperforms all literature methods by a large margin on the XRMB dataset. On the MNIST dataset, in which the literature results are closer to the maximal value of 50, our method is able to regain half of the remaining correlation. 

\begin{table}[h]
  \centering
  \begin{tabular}[c]{|c|c|c|}
	\hline
    Method & MNIST & XRMB \\
    \hline
    \multicolumn{1}{|c|}{Regularized CCA~\cite{regularized_cca}} & 28.0 & 16.9 \\
	\multicolumn{1}{|c|}{DCCA~\cite{deepcca}} & 39.7 & 92.9 \\
    \multicolumn{1}{|c|}{RCCA~\cite{rcca}} & 44.5 & 104.5 \\
    \multicolumn{1}{|c|}{DCCAE~\cite{dccae}} & 25.34 & 41.47 \\
    \multicolumn{1}{|c|}{CorrNet~\cite{chandar2016correlational}} & 48.07 & 95.01\\
    \multicolumn{1}{|c|}{NCCA~\cite{DBLP:journals/corr/MichaeliWL15}} & NA & 107.9\\
    \multicolumn{1}{|c|}{2WayNet} & \textbf{49.15} & \textbf{110.18} \\
    \hline
	\end{tabular}%
    \caption{Comparison between various methods on the XRMB and MNIST datasets. The reported values are the sum of the correlations between the learned representations of the two views. Following the literature, in these benchmarks MNIST employs a 50D shared representation space, and XRMB a 112D one.}
    \label{tab:exp}% 
\end{table}%

\subsection{Image annotation and search} 
We next evaluate the proposed model on the sentence-image matching task. In this task, each dataset contains a set of images and five matching sentences per image. For each dataset, we test our model on two tasks, searching an image given a query sentence and matching a sentence given an image. We measure our performance on three datasets, Flickr8k~\cite{flickr8k}, Flickr30k~\cite{flickr30k} and COCO~\cite{coco}, each containing 8,000, 30,000 and 123,000 images respectively. 

Images are presented by the representation layer of the VGG network~\cite{vgg} as vectors of size 4096. Sentences are represented using the published code of~\cite{Klein_2015_CVPR}. Among the available text encodings, we employ the concatenation of the Fisher Vector encoding (GMM) and the Fisher Vector of the HGLMM distribution introduced in~\cite{Klein_2015_CVPR}. Each sentence is thus represented as a 36,000D vector. Going from the image to the much larger sentence representation, we trained networks containing two conventional hidden layers of sizes 2000 and 3000 and an additional locally dense layer of 16000 neurons and $m=2$ for Flickr30k and COCO datasets. For Flickr8k, due to the relatively small dataset, we used a dense layer of 4000 neurons. 
Correlation is used as a similarity measure between images and sentences. To this end we use the middle network representations from each channels, resulting in a representation vector of size 3000.

The Flickr8k dataset is provided with training, validation, and test splits. For Flickr30K and COCO, no splits are given, and we use the same splits used by~\cite{Klein_2015_CVPR}. $\lambda_{deconv}$ is set to a value of $1/2$, which almost eliminated all off-diagonal covariances at the middle layer. The other parameters are set as in the MNIST and XRMB experiments.

Tab.~\ref{tab:flickrs} compare our results to the state-of-the-art methods on the image-sentence matching task. We also report results that we computed for the RCCA method~\cite{rcca}. The open implementations of the various deep CCA methods do not seem to scale well enough for this benchmark. Our proposed method achieves best performance almost across all scores, especially in the image annotation task, where we improved by a large margin for the three datasets, and especially when considering the top result (r@1).

\begin{table*}[t]
\centering
\begin{tabular}{|l|c|c|c|c|c|c|c|c|c|c|c|c|}
\hline
\multirow{3}{*}{Model} & \multicolumn{4}{c|}{Flickr8k} & \multicolumn{4}{c|}{Flickr30k} & \multicolumn{4}{c|}{COCO}\\
\cline{2-13}
& \multicolumn{2}{c|}{Search} & \multicolumn{2}{c|}{Annotate} & \multicolumn{2}{c|}{Search} & \multicolumn{2}{c|}{Annotate} & \multicolumn{2}{c|}{Search} & \multicolumn{2}{c|}{Annotate}\\
\cline{2-13}
& r@1 & r@5 & r@1 & r@5 & r@1 & r@5 & r@1 & r@5 & r@1 & r@5 & r@1 & r@5 \\
\hline
 {NIC~\cite{DBLP:conf/cvpr/VinyalsTBE15}} & 19.0 & NA & 20.0 & NA & 17.0 & NA & 17.0 & NA & NA & NA & NA & NA\\
 {SC-NLM~\cite{DBLP:journals/corr/KirosSZ14}} & 12.5 & 37.0 & 18.0 & 40.9 & 16.8 & 42.0 & 23.0 & 50.7 & NA & NA & NA & NA\\
 {m-RNN~\cite{DBLP:journals/corr/MaoXYWY14}} & 11.5 & 31.0 & 14.5 & 37.2 &  22.8 & 50.7 & 35.4 & 63.8 & 29.0 & 42.2 & 41.0 & 73.0\\
 {m-CNN~\cite{DBLP:conf/iccv/MaLSL15}} & 20.3 & 47.6 & 24.8 & 53.7 &  26.2 & 56.3 & 33.6 & 64.1 & 32.6 & 68.6 & 42.8 & 73.1 \\
 {DCCA~\cite{dcca_text}} & 12.7 & 31.2 & 17.9 & 40.3  & 12.6 & 31.0 & 16.7 & 39.3 & NA & NA & NA & NA\\
 {BRNN~\cite{DBLP:conf/cvpr/KarpathyL15}} & NA & NA & NA & NA & 15.2 & 37.7 & 22.2 & 48.2 & 27.4 &60.2 & 38.4 & 69.9\\
 {RNN-FV~\cite{DBLP:RNNfish}} & 23.2 & \textbf{53.3} & 31.6 & 61.2 & 27.4 & 55.9 & 35.9  & 62.5 & 30.2 & 65.0 & 40.9 & 75.0 \\
 {VQA-A~\cite{DBLP:journals/corr/LinP16}} & 17.2 & 42.8 & 24.3 & 52.2 & 24.9 & 52.6 & 33.9 & 62.5 & 37.0 & 70.9 & 50.5 & \textbf{80.1} \\
 {NLBD~\cite{DBLP:journals/corr/WangLL15}} & NA & NA & NA & NA &29.7 & \textbf{60.1} & 40.3 & \textbf{68.9} & 39.6 & \textbf{75.2}  & 50.1 & 79.7\\
 \hline
 {CCA~\cite{Klein_2015_CVPR}} & 21.3 & 50.1 & 31.0 & 59.3 & 23.5 & 52.8 & 35.0 & 62.1 & 25.1 & 59.8 & 39.4 & 67.9\\
 {RCCA~\cite{rcca}} & 18.7 & 31.1 & 11.7 & 19.2 & 22.7 & 34.2 & 28.3 & 48.2 & NA & NA & NA & NA\\
 {2WayNet} & \textbf{29.3} & 49.7 & \textbf{43.4} & \textbf{63.2} & \textbf{36.0} & 55.6 & \textbf{49.8} & 67.5 & \textbf{39.7} & 63.3 & \textbf{55.8} & 75.2 \\
\hline
\end{tabular}
\caption{The recall rates for the Flickr8k, Flickr30k and COCO image to sentence matching benchmarks. In image search, we show the percent of correct matches for the top retrieval out of all test images (r@1 for search). In image annotation, given a query image, fetching one of five matching sentences is considered a success. Recall rates for the top five (r@5) denote the cases in which a successful match exists in any of the top five results. The experiments reported for regularized CCA, RCCA, and our 2-way net all use the same sentence and image representation. Sentences are represented as the concatenation of the GMM-FV and the HGLMM-FV representations of~\cite{Klein_2015_CVPR}. 
\label{tab:flickrs}. Image is represented with the last dense connected of the CNN used in ~\cite{Klein_2015_CVPR}. }
\label{tab:flickrs}
\end{table*}%

\subsection{Ablation analysis}
We perform an ablation analysis aimed at isolating the effect of the various architectural novelties suggested. Experiments were conducted on the Flickr8k, Flickr30k, MNIST and XRMB datasets. Each experiment uses the baseline configuration used in previous experiments with only one alternation.

\noindent{\bf Batch Normalization}
For this experiment, we used different settings for the BN layer. The configuration settings include: (1) without BN, (2) with conventional BN (before ReLU) without regularizing $\gamma$, (3) with post-ReLU BN, without regularizing $\gamma$, (4) using BN before the ReLU with $\lambda_\gamma = 0.05$, and (5) our proposed method: BN applied only after ReLU with $\lambda_\gamma = 0.05$.  Tab.~\ref{tab:results} report the performance of the various configurations in terms of correlation and the mean variance of all features on the validation set.

\begin{table*}[h]
\centering
    \resizebox{\textwidth}{!}{\begin{tabular}{|l|c|c|c|c|c|c|c|c|c|c|c|c|}
    \hline
   \multirow{2}{*}{Scenario} & \multicolumn{3}{c|}{Flickr8k} & \multicolumn{3}{c|}{Flickr30k}&          \multicolumn{3}{c|}{MNIST} & \multicolumn{3}{c|}{XRMB} \\
   \cline{2-13}
   & Corr & Var x & Var y & Corr & Var x & Var y & Corr & Var x & Var y & Corr & Var x & Var y \\
    \hline
      \makecell[l]{Suggested method}  & \textbf{1758}& 0.65 & 0.64 & \textbf{2135} & 0.41 & 0.43&\textbf{ 49.15} & 1.32 & 1.27& \textbf{110.18} & 1.08 & 1.06\\
    \hline
    No BN  & 1482 & \num{1.90} & \num{1.71} & 1562 & 1.38 & 1.40 & 13.14 & 0 & 0 & 25.58 & 0 & 0\\
    before ReLU, $\lambda_\gamma=0$ & 1313 & 0.66 & 0.44 & 1385 & 0.37 & 0.28 & 48.40 & 0.18 & 0.18& 107.55 & \num{0.15} & \num{0.15}\\
    after ReLU, $\lambda_\gamma=0$ & 1598 & 1.34 & 1.25 & 1655 & 0.73 & 0.74& 48.98 & 0.38 & 0.37& 109.42 & 0.40 & 0.39\\
    \makecell[l]{before ReLU, $\lambda_\gamma > 0$} & 1423 & 0.33 & 0.21 & 1322 & 1.80 & 0.96 & 48.76 & 0.73 & 0.72& 108.79  & 0.50 & 0.50\\
    \hline
    No Dropout & 1091 & 0.34 & 0.33 & 1446 & 0.57 & 0.52& 49.00 & 1.33 & 1.33& 109.69 & 0.79 & 0.79\\
    \makecell[l]{Conventional dropout} & 1557 & 0.17 & 0.17 & 1658 & 0.12 & 0.14& 48.77 & 1.90 & 1.90& 93.24 & 0.24 & 0.16\\
    \hline
  
    \end{tabular}}
   \caption{Ablation study on the Flickr8k, Flickr30k, MNIST and XRMB datasets, testing various batch normalization (BN), variance regularization and dropout options. We measure the variance in both views, $X$ and $Y$ (averaging the variance of all dimensions), and the obtained correlation. The suggested method is to apply BN only after ReLU with $\lambda_\gamma = 0.05$ and to employ tied dropout. All BN variants employ tied dropout with probability of 0.5. All dropout variants apply BN similarly to the suggested method.}
\label{tab:results}% 
\end{table*}	

\begin{table}[t]
\centering
    \begin{tabular}{|l|c|c|}
    \hline
    Scenario & Search r@1  & Annotate r@1 \\
    \hline
    Suggested method  & \textbf{29.3} & \textbf{43.4}\\
    \hline
    No BN  & 21.1 & 25.6\\
    before ReLU, $\lambda_\gamma=0$ & 26.9 & 39.6\\
    after ReLU, $\lambda_\gamma=0$ & 27.9 & 40.9\\
    \hline
    No Dropout & 25.64 & 36.6\\
    Conventional dropout & 29.04 & 42.1\\
    \hline
     \end{tabular}
     \caption{Recall results on Flickr8k for the same experiments as described at Tab. ~\ref{tab:results}.}
\label{tab:results_recall}% 
\end{table}

As Tab.~\ref{tab:results} shows, batch normalization has a profound effect on the network's results. Results taken without batch normalization were trained with lower learning rate, using higher learning rate prevented the training from converging. We can also see that using the $1/\gamma$ regularization term significantly increases the variance of the hidden representation, which, in turn, stabilizes the training process and improves correlation. The effect studied in Section~\ref{sec:bn} is clearly visible in the ablation study, positioning the BN layer after the Leaky ReLU prevents an unbalance representations as can be seen by the difference in variances, which increases the correlation of two representation significantly. Tab. ~\ref{tab:results_recall} contains r@1 results for the same experiments on the Flickr8k dataset. As in Tab. ~\ref{tab:results} out suggested configuration achieves the base recall rates.   

\noindent{\bf Tied Dropout}
We trained the same base configuration as described above. We tested our proposed method using a conventional dropout instead of a tied dropout and removing dropout altogether.
In all experiments, the dropout probability $p$ was set at 0.5.

As can be seen, the performance drops when using the conventional dropout instead of the proposed tied dropout layer. The benefits of the tied dropout layer are most significant on the large datasets Flickr8k and Flickr30k, where over-fitting is likely. The shrinking effect discussed in Section ~\ref{sec:drop} is clearly visible and is manifested as low variance of the output of the model based on conventional dropout, compared to a much higher variance when using the tied dropout.   % 

\noindent{\bf Leaky ReLU} We also tested the contribution of other parameters on the model's performance. One of the major benefits was using leaky ReLU non-linearity. Using conventional ReLU resulted in large correlation loss of about $33\%$ (1192 total correlation) for Flickr8k. 
\noindent{\bf Loss terms} Another aspect we tested is the effect of various loss terms on correlation and recall rates. Removing $L_h$ term results in a $31\%$ (1230) decrease of correlation. This settles with Lemma \ref{lemma:sumcorr} which links the output's correlation and $L_h$ loss term. While the $L_h$ loss increases the output's correlation, the reconstruction loss terms $L_x$ and $L_y$ decreases the result's correlation. Removing them both increases correlation by $56\%$ (2752). While the correlation produced between the two views is higher without the two reconstruction losses, the dimensions of each representation are highly correlated resulting in a decrease of $87\%$ in image search and $91\%$ in image annotation performance as measured by recall@1: from the full method's performance of 29.3 and 43.4 for the tasks of image search and image annotation to 4.0 and 3.9 respectively.
\noindent{\bf Regularization} The effect for $R_\gamma$ can be viewed in Tab. ~\ref{tab:results}. Removing the $R_decov$ results in a decrease of all measures. Image search r@1 and r@5 results decrease by $14\%$ and $10\%$ respectively and the image annotation r@1 and r@5 results decrease by $10\%$ and $8\%$ respectively. Moreover, the correlation is reduced by $4\%$. 
\noindent{\bf Locally dense layer} To test the effect of the proposed locally dense layer, we trained our model on Flickr30k with a regular dense layer of the same size (16000 neurons) and with a regular dense layer of half the size. Image annotation r@1(r@5) results degrade by $7\%$($3\%$) and image search by $1\%$($1\%$) when using conventional 16000 neurons dense layer. Using dense layer half the size results in a drop of $13\%$($9\%$) for image annotation r@1(r@5) rates and $11\%$($8\%$) for image search recall rates r@1(r@5).

\noindent{\bf Parameter sensitivity:} Fig.~\ref{fig:leak}(a) shows the effect of different leakiness coefficient values on the correlation as measured on the validation sets of the MNIST and XRMB data sets. The results were obtained by training the network using leakiness coefficients ranging between 0 and 0.7. As can be seen, there is a large region of values that provide better performance than the conventional zero-leakiness ReLU. Fig.~\ref{fig:leak}(b) shows the effect of the regularization weight $\lambda_\gamma$ that controls the learned variance of the BN layer. The value used in our experiments seems to be beneficial and lies at a relatively wide high-performance plateau.
\begin{figure*}[t]%
    \centering

    \resizebox{\textwidth}{!}{\begin{subfigure}{\columnwidth}
\begin{tikzpicture}[scale=0.82, transform shape]
\begin{axis}[
    xlabel={$\alpha$},
    ylabel={\% Correlation},
    xmin=0, xmax=0.7,
    ymin=97, ymax=99,
    xtick={0,0.1,0.2,0.3,0.4,0.5,0.6,0.7},
    ytick={97, 97.5, 98, 98.5, 99},
    legend pos=south west,
    ymajorgrids=true,
    grid style=dashed,
]
 
\addplot[
    color=blue,
    mark=square,
    ]
    coordinates {
    (0,98.0)(0.1,98.2)(0.2,98.42)(0.3,98.35)(0.4,98.34)(0.5,98.19)(0.6,97.69)(0.7,97.44)

    };
    \addplot[color=red, mark=triangle,]
    coordinates {
    (0,98.04)(0.1,98.24)(0.2,98.38)(0.3,98.34)(0.4,98.18)(0.5,98.48)(0.6,98.28)(0.7,98.18)
    };
    \legend{XRMB, MNIST}
 
\end{axis}
\end{tikzpicture}

\caption{}
\end{subfigure}

\begin{subfigure}{\columnwidth}
    \begin{tikzpicture}[scale=0.82, transform shape]
\begin{axis}[
    xlabel={$\lambda$},
    ylabel={\% Correlation},
    xmin=0, xmax=1,
    ymin=96, ymax=99,
    xtick={0, 0.1, 0.2, 0.3, 0.4, 0.5, 0.6, 0.7, 0.8, 0.9, 1
},
    ytick={96, 96.5, 97, 97.5, 98, 98.5, 99},
    legend pos=south west,
    ymajorgrids=true,
    grid style=dashed,
]
 
\addplot[
    color=blue,
    mark=square,
    ]
    coordinates {
(0, 97.71)
(0.05, 98.32)
(0.1, 98.47)
(0.15, 97.28)
(0.2, 97.97)
(0.25, 97.79)
(0.3, 97.72)
(0.35, 98.09)
(0.4, 97.79)
(0.45, 97.93)
(0.5, 97.63)
(0.55, 98.01)
(0.6, 97.34)
(0.65, 96.21)
(0.7, 97.22)
(0.75, 97.56)
(0.8, 96.91)
(0.9, 97.19)
(0.95, 96.13)
(1, 97.84)
    };
    \addplot[color=red, mark=triangle,]
    coordinates {
(0, 96.04)
(0.05, 98.48)
(0.1, 98.50)
(0.15, 98.44)
(0.2, 98.40)
(0.25, 98.54)
(0.3, 98.36)
(0.35, 98.42)
(0.4, 98.50)
(0.45, 97.68)
(0.5, 98.08)
(0.55, 97.04)
(0.6, 97.82)
(0.65, 98.44)
(0.7, 96.98)
(0.75, 98.14)
(0.8, 98.10)
(0.9, 96.70)
(0.95, 98.06)
(1, 97.70)    
};
    \legend{XRMB, MNIST}
    \end{axis}
\end{tikzpicture}
\caption{}
\end{subfigure}}

    \caption{(a) The effect of the leakiness parameter on the MNIST and XRMB benchmarks, as measured on the validation set using the sum of correlations divided by the dimension (in percent). The solid red line depicts the MNIST results; the dashed black line depicts the XRMB results. (b) A similar plot showing the effect of coefficient $\lambda_\gamma$.}
    \label{fig:leak}%
\end{figure*}
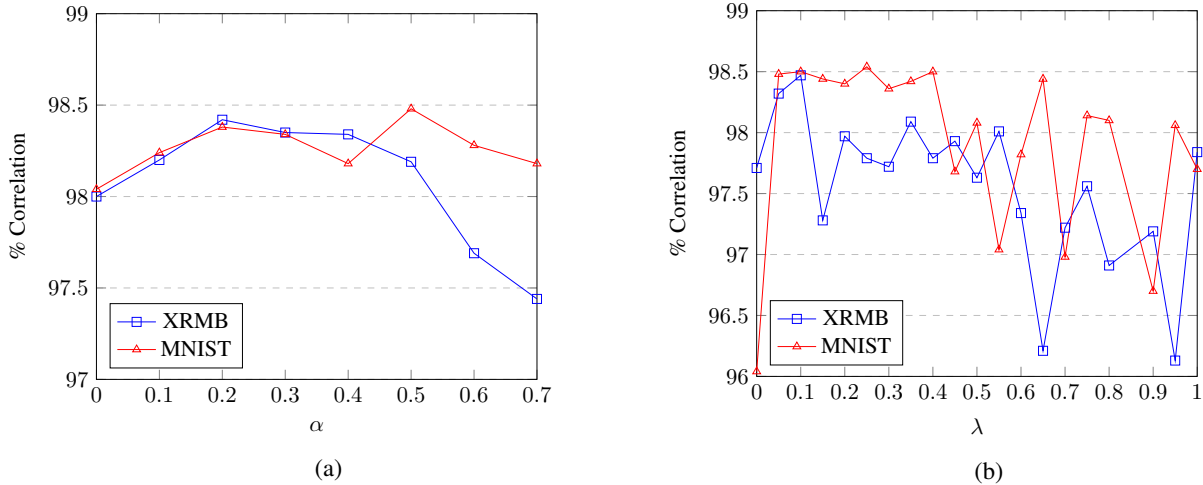

\section{Conclusions}

In this paper, we present a method for linking paired samples from two sources. The method significantly outperforms all literature methods in the highly applicable and well studied domain of correlation analysis, including the classical methods, their modern variants, and the recent deep correlation methods. We are unique in that we employ a tied 2-way architecture, reconstructing , and unlike most methods, we employ the Euclidean loss. In order to promote an effective training, we introduce a series of contributions that are aimed at maintaining the variance of the learned representations. Each of these modifications is  provided with an analysis that explains its role and together they work hand in hand in order to provide the complete architecture, which is highly accurate. 

Our method is generic and can be employed in any computer vision domain in which two data modalities are used. In addition, our contributions could also help in training univariate regression problems. In the literature, the Euclidean loss is often combined with other losses~\cite{NIPS2015_5638,Zhang_2015_CVPR}, or replaced by an alternative loss~\cite{Levy_2015_ICCV} in order to mitigate the challenges of training regression problems. Our variance injection method can be easily incorporated into any existing network.

As future work, we would like to continue exploring the use of tied 2-Way networks for matching views from different domains. In almost all of our trained networks, the biases of the batch normalization layers in the solutions tend to have very low values. These biases can probably be eliminated altogether. In addition, in many encoder/decoder schemes, layers are added gradually during training. It is possible to adopt such a scheme to our framework, adding hidden layers in the middle of the network one by one.

%\clearpage
\bibliographystyle{plain}
\bibliography{deepl2match}
\end{document}